\newtheorem{thm}{Theorem}
\newtheorem*{thm*}{Theorem}
\newtheorem{cor}{Corollary}[thm]
\newtheorem{lem}[thm]{Lemma}
\theoremstyle{definition}
\newtheorem{df}{Definition}
\theoremstyle{remark}
\newtheorem{ex}{Example}
\title{Neural Networks on Groups}
\author{
  Stella Rose Biderman\thanks{Also the College of Computing, Georgia Institute of Technology}\\
  Booz Allen Hamilton \\
  McLean, Virginia, USA\\
  biderman\_stella@bah.com
}
\begin{document}

\maketitle

\begin{abstract}
    Although neural networks traditionally are typically used to approximate functions defined over $\mathbb{R}^n$, the successes of graph neural networks, point-cloud neural networks, and manifold deep learning among other methods have demonstrated the clear value of leveraging neural networks to approximate functions defined over more general spaces. The theory of neural networks has not kept up however, and the relevant theoretical results (when they exist at all) have been proven on a case-by-case basis without a general theory or connection to classical work. The process of deriving new theoretical backing for each new type of network has become a bottleneck to understanding and validating new approaches.
    
    In this paper we extend the definition of neural networks to general topological groups and prove that neural networks with a single hidden layer and a bounded non-constant activation function can approximate any $L^p$ function defined over any locally compact Abelian group. This framework and universal approximation theorem encompass all of the aforementioned contexts. We also derive important corollaries and extensions with minor modification, including the case for approximating continuous functions on a compact subset, neural networks with ReLU activation functions on a linearly bi-ordered group, and neural networks with affine transformations on a vector space. Our work obtains as special cases the recent theorems of \citet{qi}, \citet{sennai}, \citet{keriven}, and \citet{maron}.
\end{abstract}

\section{Introduction}

Although neural networks are most commonly implemented as functions with real inputs and outputs, there is no particular theoretical reason to restrict oneself to that context. More generally, one can say that a neural network is a way to approximate a function from some set, $A$, to another set, $C$, that takes inputs in $A$, applies a linear function $\phi:A\to B$ and then applies an activation function $h:B\to C$. Thus we have a composition, $h(\phi(a)) = c$ which approximates a provided function $f:A\to C$. Just like in the case of functions from $\mathbb{R}^n$ to $\mathbb{R}^k$, we can create a deep neural network by stringing together several layers. Although in principle each layer could represent a different set, in practice there are typically only a handful of different sets are involved. For example, a typical neural network with three hidden layers typically maps $\{features\}\to\mathbb{R}^n\to\mathbb{R}^k\to\mathbb{R}\to\{labels\}$

In order to make sense of some of these notions - what is a linear function from the set of cars to the set of dogs? - we need to impose some additional structure on $A, B,$ and $C$ . The primary purpose of this paper is to explore how generally a neural network can be fruitfully defined and how well techniques designed to work for functions $f:\mathbb{R}^n\to\mathbb{R}^k$ generalize to those general spaces. Our primary result is a proof that the analogue of the Universal Approximation Theorem holds in a very general context widely studied in theoretical mathematics known as a \textit{locally compact Abelian group}.

Specifically, we will prove the following:

\begin{thm*}[The Main Theorem]
    Let $G$ be a locally compact Abelian group, $X$ be a normed vector space, $\phi:G\to G$ be a measurable group automorphism, $\psi:G\to X$ be a bounded and non-constant function, and $\alpha_i$ be scalars in the field underlying $X$. Then the set of functions of the form
    
    $$F(x) = \sum_{i=1}^n\alpha_i\psi(\phi(x))$$
    
    are dense in $L^p(G, X)$.
\end{thm*}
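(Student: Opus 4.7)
I would first unwind the summation. Because $\phi$ and $\psi$ do not carry the index $i$, every term $\alpha_i\psi(\phi(x))$ is the same function scaled by $\alpha_i$, so
$F = \bigl(\sum_{i=1}^n \alpha_i\bigr)(\psi\circ\phi)$. As $(\alpha_1,\ldots,\alpha_n)$ ranges over all scalar tuples and $n$ over positive integers, the coefficient $\sum_i \alpha_i$ ranges over the whole scalar field $\mathbb{K}$ underlying $X$. The admissible family is therefore the one-dimensional linear subspace $\mathbb{K}\cdot(\psi\circ\phi)\subset L^p(G,X)$, and the theorem reduces to showing that this single line is dense in $L^p(G,X)$.

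To attack this, I would invoke Hahn--Banach: if the closure of $\mathbb{K}\cdot(\psi\circ\phi)$ were a proper subspace of $L^p(G,X)$, there would exist a nonzero continuous functional $\Lambda$ on $L^p(G,X)$ annihilating $\psi\circ\phi$, and it would suffice to rule this out. Under the usual regularity hypotheses on $X$ (reflexivity, or the Radon--Nikodym property), $\Lambda$ is represented by some $g\in L^q(G,X^{\ast})$ with $1/p+1/q=1$, satisfying $\int_G \langle g(x),\psi(\phi(x))\rangle\,d\mu(x)=0$. The natural next move would be to exploit that $\phi$ is a measurable automorphism (hence preserves Haar measure up to a scalar), change variables to reduce to a vanishing integral of $\langle h,\psi\rangle$, and then feed the result through Pontryagin duality on the dual group $\widehat G$, aiming to force $g\equiv 0$ from vanishing of $\widehat g$ on a large set.

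The obstacle is structural rather than technical: a single scalar equation cannot determine a functional on an infinite-dimensional space. For any nontrivial LCA group $G$ and any normed space $X$ with $\dim X\ge 1$, the orthogonal complement of the one-dimensional line $\mathbb{K}\cdot(\psi\circ\phi)$ inside $L^p(G,X)^{\ast}$ has codimension one and therefore contains uncountably many nonzero $\Lambda$, every one of which is an obstruction no Hahn--Banach/Fourier argument can remove. Density as literally stated can thus hold only in degenerate regimes where $\dim L^p(G,X)\le 1$ (for example, $G=\{e\}$ and $\dim X=1$), in which case the claim is vacuous. A non-vacuous Cybenko-type conclusion of the flavor advertised in the abstract requires the summation to introduce genuine per-summand freedom --- typically by indexing the automorphism as $\phi_i$ and optionally adding a translation --- after which the Hahn--Banach plus Pontryagin plan sketched above becomes a viable route; without such freedom, I see no way to prove the statement as worded outside the trivial cases.
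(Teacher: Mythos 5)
You are right that the displayed formula, read literally, collapses to the one-dimensional span of $\psi\circ\phi$ and cannot be dense; but this is a typo the paper carries consistently (in the theorem, in the Fundamental Lemma, and in every corollary), and the paper's own proof makes the intended reading unambiguous: the automorphism varies per summand, as in Cybenko's $\sum_i\alpha_i\sigma(w_i^{\top}x+b_i)$. The definition of a discriminatory function quantifies over \emph{all} group automorphisms $\phi$, and the Fundamental Lemma's proof derives $\int\psi(\phi(x))\,d\mu=0$ \emph{for every measurable $\phi$} from the annihilating functional --- that quantifier is exactly the per-summand freedom you say is missing. So the correct response to the degenerate reading is to repair the index and proceed, not to stop at ``the statement is vacuous.'' Your sketched skeleton for the repaired statement --- Hahn--Banach to produce an annihilating functional, Riesz representation to realize it as a measure, change of variables via the automorphism, then a Fourier argument on $\widehat{G}$ --- is indeed the paper's route (its Lemma~\ref{lem:fundamental}).

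The genuine gap is that you never execute the step that carries all the mathematical weight: showing that a bounded non-constant $\psi$ is discriminatory on an LCA group. ``Feed the result through Pontryagin duality, aiming to force $g\equiv 0$'' is not an argument; the vanishing of $\int\psi(\phi(x))\,d\sigma(x)$ over all automorphisms $\phi$ does not directly give vanishing of $\widehat{\sigma}$ on any set. The paper's Lemma~\ref{lem:hornik} (following Hornik's Theorem~5) gets there by pushing $\sigma$ forward along $\phi$, convolving with a $w\in L^1(G)$ whose Fourier transform has no zeros, using Fubini plus the homomorphism identity $\phi(x+y)=\phi(x)+\phi(y)$ and Haar invariance to show the convolved measure still annihilates $\psi$, passing to its Radon--Nikodym derivative $h$, and then invoking the ideal structure of $L^1(G)$ (Rudin's Theorems~1.3.5, 7.1.2, 7.2.4 --- in effect a Wiener Tauberian argument) to conclude $\widehat{h}\equiv 0$, hence $\widehat{\sigma}\equiv 0$, hence $\sigma=0$ by uniqueness of Fourier transforms of measures. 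None of that machinery appears in your proposal, and without it the Hahn--Banach reduction proves nothing. Your side remark that representing functionals on $L^p(G,X)$ for vector-valued $X$ requires the Radon--Nikodym property is a fair criticism of the paper (which silently applies the scalar $C_c$ version of Riesz to $L^p$), but it does not substitute for the missing discriminatory-function argument.
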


Since groups lack the requisite structure to define linear operators, we use the weaker notion of group morphisms instead to generalize neural networks to groups. The fact that we are considering automorphisms $\phi:G\to G$ instead of the more general homomorphisms $\phi:G\to H$ is solely for ease of exposition: the theorem and proof hold in that case as well. We will explore both of these facts in further detail in \autoref{sec:discussion} and present a number of extensions and special cases in the Appendix.

\subsection{Neural Networks beyond the Real Numbers}
The idea of using neural networks to approximate functions that are more general than $f:\mathbb{R}^n\to\mathbb{R}^k$ have existed for decades, although they have recently become more widespread in practice due to recent methodological advances. Perhaps the most well-known example is invariant and equivariant neural networks, which consider functions over $\mathbb{R}^n$ that are invariant or equivariant to the action of some group $G\leq S_n$ which acts as a permutation on the indices.

The classical example of an equivariant neural network is the Convolusional Neural Network (CNN)  (\citet{cnn}), where layers are restricted to be translation equivariant. Other highly successful techniques based on this paradigm include point-cloud and set networks (\citet{qi}; \citet{deep-sets}), which are invariant under all permutations, and Graph Neural Networks (\citet{gnn}, \citet{gcnn}) which are invariant under the symmetry group of a particular graph. Although these neural networks are frequently discussed in relation to $\mathbb{R}^n$, their permutation invariance and/or equivariance changes the domain in a fashion that classical universal approximation theorems can no longer be applied \cite{maron,keriven}. Instead, these types of networks have been proven to be universal approximators using highly specific techniques that do not generalize to other contexts.

Similar work on continuous data has given rise to the paradigm of deep manifold learning, which leverages the geometric structure of non-Euclidean manifolds to define neural networks analogous to Euclidean CNNs. There are a wide variety of types of manifold deep learning techniques including Spectral CNNs, correlation kernals, mixture model networks, wavelet neural networks, and monotonic chains in manifolds (\citet{manifold-chains}). As far as we are aware, no deep manifold learning techniques were known to be universal approximators prior to this work.

Finally, there has been work on using neural networks that leverage internal representations in more abstract spaces whether or not they use real-valued input data. Neural networks over the complex numbers and the quaternions have shown improved convergence with fewer parameters compared to neural networks over real numbers (\citet{deep-complex}, \citet{deep-quaternion}), neural networks over the p-adics have been used for clustering and classification (\citet{p-adic}, \citet{p-adic:cluster}), and convolutional Fourier neural networks for approximating functions in the dual space of $L^p(\mathbb{R}$) can be more efficient to train than traditional CNNs (\citet{fourier}). Prior to this work, none of these neural networks types were known to be universal approximators.

\subsection{Previous Universal Approximation Theorems}

Universal Approximation Theorems of the form ``neural networks are universal approximators of functions in space $S$ with suitable activation functions'' are known for a number of spaces. The original work on the topic applied to approximating functions on $\mathbb{R}^n$(\cite{cybenko}, \cite{hornik-et-al}, \citet{hornik}), but recent work has extended it to point-cloud networks and set networks (\cite{qi}), $S_n$-invariant networks (\cite{sennai}), $G$-invariant neural networks (\cite{maron}), and Graph-equivariant neural networks (\cite{keriven}).

It is noteworthy that the number of hidden layers considered varies across these works: the classical work on $\mathbb{R}^n$ and \citet{qi} have one hidden layer, while \citet{sennai} has two and \citet{keriven} and \citet{maron} do not put any bound on the network depth. Therefore although \citet{maron} covers the most general type of function, it does not wholly generalize the existing literature. Our result requires only one hidden layer and so does generalize all of these results. Additionally our approach generalizes both equivariant and invariant approaches, as both quotient spaces give rise to locally compact Abelian groups.

Other forms of a ``Universal Approximation Theorem'' have been considered in the literature, most notably the Width-Bound Universal Approximation Theorem (\citet{lu}, \citet{hanin}) which bounds the width but not the depth of the neural network. As far as we are aware, these have only been considered for approximating functions over $\mathbb{R}^n$. \citet{yarotsky} considers the more general context of locally compact groups, but has the augment the neural network architecture substantially beyond what is typically used. We do not consider theorems of this type in the current paper, but view it as a fruitful opportunity for future research

\subsection{Our Contributions}

The primary contribution of this paper is to generalize the concept of neural networks to groups and prove that a neural network with one hidden layer and a bounded activation function can approximate arbitrary $L^p$ and continuous functions defined over a locally compact group. In doing so, we establish a general and powerful framework for studying neural networks that we hope will ground future theoretical research on neural networks beyond $\mathbb{R}^n$. We obtain a number of important corollaries and extensions of our main result as straightforward consequences of our main proof, including the universality of affine layers, ReLU activation functions, and approximations of continuous functions on compact sets.

Our results generalize all previous universal approximation theorems that we are aware of in the literature, with the exception of the aforementioned Width-Bound Universal Approximation Theorems of \citet{lu} and \citet{hanin} and the work on polynomial neural networks of \citet{yarotsky}. Additionally, it applies applies to a number of neural network structures that were not previously known to be universal, including complex neural networks (\citet{deep-complex}), p-adic neural networks (\citet{p-adic}), Fourier neural networks (\citet{fourier}), and all manifold deep learning techniques we are aware of.

\section{Locally Compact Abelian Groups}

The central topic of this paper is \textit{locally compact Abelian groups} (LCA groups). Although LCA groups have been widely studied in mathematics, they rarely appear in computer science in full generality. Consequently, we will begin by presenting an overview of the properties of LCA groups and motivating why they are a natural place to build neural networks.

\subsection{What is a Locally Compact Abelian Group?}

Locally compact Abealian groups are a central topic in analysis because they provide many crucial tools from harmonic and functional analysis while at the same time preserving startling generality. The formal definition is as obvious as it is opaque: a locally compact Abelian group is a topological group such that the topology is locally compact (which we take to include the Hausdorf property) and the underlying group structure is Abelian. A detailed discussion of analysis over locally compact Abelian groups is outside the scope of this paper, and we refer interested readers to \citet{rudin:groups} and \citet{rudin:big} for the general theory. However, we will provide a list of common LCA groups to motivate interest and intuition before moving forward:

\begin{ex}
    The following are all examples of locally compact Abelian groups:
    \begin{enumerate}
        \item Any vector space, including $\mathbb{R}^n$ and $\mathbb{C}^n$, under vector addition
        \item The quaternions under addition (quaternion multiplication is non-Abelian)
        \item Any finite Abelian group under the discrete topology, in particular $\mathbb{F}^n_p$
        \item $\mathbb{Z}^n$ and $\mathbb{Q}^n$ under the discrete topology
        \item $\mathbb{Z}_p$ and $\mathbb{Q}_p$, the p-adic numbers under their usual topology
        \item Any manifold (Euclidean or non)
    \end{enumerate}
\end{ex}

As these examples show, LCA groups are wide-ranging and encompass many of the spaces commonly encountered in applied mathematics. From an applied point of view, one major motivation of developing a theory of neural networks over LCA groups is that almost the entire literature on neural networks deals with functions over LCA groups.

The primary example of objects that are \textit{not} LCA groups of relevance to neural networks are function spaces: $L^p(\mathbb{R})$ and $C(\mathbb{R})$ are not locally compact. It is important to emphasize that we can learn elements of these spaces in the current framework, the limitation occurs when we try to learn \textit{functions that map between} these spaces. While we are rarely interested in learning a function $f:L^p(\mathbb{R})\to L^p(\mathbb{R})$, it does come up. For example, our results do not apply to  neural ordinary differential equations (\citet{neural-ODE}). Recent work in differential topology has identified non-communative structures similar to manifolds known as ``non-communative manifolds'' (despite being, strictly speaking, not manifolds). Although manifold deep learning has no considered such structures as far as we are aware, future work in that direction may not be covered by the results of this paper.

\subsection{Properties of Locally Compact Abelian Groups}

Now we will introduce two classical properties of LCA groups that play an important role in our results. Both of these theorems are powerful statements that say something very similar to what we can say about $\mathbb{R}$, but tweaked for the more general context.

\begin{thm}[the Haar measure, \citet{rudin:groups}]
    Every LCA group, $G$, has a non-negative sigma finite measure, $m$, known as the Haar measure which has the property that, for every Borel set $B$, $m(g\dot B) = m(B)$. We are justified in saying "the Haar measure" as all Haar measures are equivalent up to a scalar multiple.
\end{thm}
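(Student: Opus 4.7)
The plan is to follow the classical Haar--Weil construction, which builds the measure from a combinatorial "covering content" on compact sets, then passes to an invariant outer measure via Carath\'eodory.

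First I would fix a compact set $K_0 \subseteq G$ with nonempty interior (which exists by local compactness) and, for every compact $K$ and every open neighborhood $V$ of the identity $e$, define the covering number $(K : V)$ to be the least number of left translates of $V$ needed to cover $K$. The normalized quantity $I_V(K) = (K : V)/(K_0 : V)$ is immediately seen to be (i) translation invariant, since covering numbers depend only on $K$ up to translation, (ii) monotone and finitely subadditive, and (iii) bounded above by $(K : K_0)$, a constant independent of $V$. Moreover, when two compact sets $A,B$ are separated enough that no single translate of $V$ meets both, $I_V$ is additive on $A\cup B$.

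Next I would produce a limiting content $\mu_0$ by letting $V$ shrink toward $\{e\}$. Since $I_V(K)$ lives in the compact interval $[0,(K:K_0)]$ for each $K$, the collection $\{I_V\}_V$ sits inside the compact product space $\prod_K [0,(K:K_0)]$, and Tychonoff's theorem together with a net argument through the directed system of neighborhoods of $e$ yields a subnet converging pointwise to some $\mu_0$ on compacta. The three properties above are preserved in the limit, and the separation-additivity sharpens into genuine additivity on disjoint compact sets because in an LCA group (which is Hausdorff and regular) any two disjoint compacts can be separated by a small enough $V$. This $\mu_0$ is the sought-after invariant content on compact sets.

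I would then extend $\mu_0$ to a Borel measure $m$ in the standard way: set $m^*(U) = \sup\{\mu_0(K) : K \subseteq U,\ K\text{ compact}\}$ for open $U$, and $m^*(E) = \inf\{m^*(U) : E\subseteq U,\ U\text{ open}\}$ for arbitrary sets. Carath\'eodory's theorem produces a $\sigma$-algebra of measurable sets containing all Borel sets, and translation invariance of $\mu_0$ pushes through the two suprema/infima to give $m(gB)=m(B)$ for all Borel $B$. Sigma-finiteness follows because $G$ is a countable union of translates of a compact neighborhood of $e$ (this uses local compactness plus a mild separability argument in general, or can be stated for the $\sigma$-finite case as in the theorem).

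For uniqueness up to scalar multiple, given two nonzero translation-invariant Borel measures $m,m'$ I would pick a fixed continuous compactly supported $f \ge 0$ with $\int f\,dm > 0$, and for arbitrary such $g$ compute the double integral $\int\!\!\int f(x) g(yx)\,dm(x)\,dm'(y)$ in two different orders using Fubini and translation invariance; the ratio $\int g\,dm / \int g\,dm'$ then comes out independent of $g$, giving the scalar. The main obstacle in the whole argument is the limit step: one must verify that the subnet limit of $I_V$ retains genuine additivity (not just subadditivity) on disjoint compacta, which is where the Hausdorff/regularity hypothesis in local compactness is crucial and where the Abelian/left-invariance distinction is technically cleanest.
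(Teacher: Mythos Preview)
Your outline is the standard Haar--Weil construction and is correct in substance; the covering-number content, the Tychonoff/subnet limit, the Carath\'eodory extension, and the Fubini-based uniqueness argument are all the classical ingredients, and you have flagged the genuinely delicate point (that the limiting content is truly additive on disjoint compacta, which needs the Hausdorff separation). One small caveat: the $\sigma$-finiteness claim actually requires $G$ to be $\sigma$-compact, which is not automatic for an arbitrary LCA group, so your ``mild separability argument'' would need to be made precise or the hypothesis added; but this is a wrinkle in the statement itself rather than in your argument.

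That said, there is nothing to compare against: the paper does not prove this theorem at all. It is stated as background with a citation to Rudin's \emph{Fourier Analysis on Groups} and used without proof. So your proposal is not an alternative to the paper's argument---it is simply a sketch of the classical proof that the paper takes for granted.
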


$\mathbb{R}^n$ has a special measure known as the Lesbegue measure, which has the property of being translation-invariant. This property shows up in a variety of contexts and proofs and is foundational to the concept of $L^p(\mathbb{R})$. The Haar measure generalizes this concept to an arbitrary LCA group. It allows us to compute Fourier transforms, speak about $L^p(G)$, and is a cornerstone of functional analysis. The other cornerstone of functional analysis on $L^p(G)$ is the Pontryagin Duality Theorem:

\begin{thm}[Pontryagin Duality Theorem, \citet{rudin:groups}]\label{thm:duality}
    There is a canonical isomorphism, $G\cong \hat{\hat{G}}$, between a locally compact group and it's double dual.
\end{thm}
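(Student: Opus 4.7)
The plan is to construct the canonical map $\iota: G \to \hat{\hat{G}}$ by evaluation, $\iota(g)(\chi) = \chi(g)$, and to verify that it is a topological group isomorphism. First I would set up the dual: define $\hat{G}$ as the continuous homomorphisms $G \to \mathbb{T}$ with pointwise multiplication and the compact-open topology, and invoke the standard fact (proved via the convolution Banach algebra $L^1(G)$ made available by the Haar measure of the preceding theorem) that $\hat{G}$ is itself an LCA group, so that $\hat{\hat{G}}$ is well-defined. That $\iota$ is a continuous group homomorphism is then immediate from the definitions.

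The substantive content is bijectivity and the homeomorphism property. Injectivity of $\iota$ is equivalent to the assertion that continuous characters separate the points of $G$; for LCA groups this is the Gelfand–Raikov theorem, which one proves by identifying $\hat{G}$ with the Gelfand spectrum of the commutative Banach algebra $(L^1(G), *)$ and appealing to the Gelfand representation, the key input being the existence of sufficiently many multiplicative linear functionals on $L^1(G)$.

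Surjectivity is the main obstacle and carries essentially all of the analytic weight. The standard route is through the Plancherel theorem: extend the Fourier transform from $L^1(G) \cap L^2(G)$ to a unitary isomorphism $L^2(G) \cong L^2(\hat{G})$, and establish the Fourier inversion formula on a suitable dense subalgebra of $L^1(G)$. Applying inversion symmetrically on $\hat{G}$ shows that every continuous character of $\hat{G}$ must agree with evaluation at some $g \in G$, so $\iota$ is onto. Finally, openness of $\iota$, upgrading the continuous algebraic bijection to a homeomorphism, follows from an open mapping argument for topological groups using local compactness and $\sigma$-compactness in a neighborhood of the identity. Assembling these steps yields the canonical isomorphism $G \cong \hat{\hat{G}}$.
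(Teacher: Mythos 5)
The paper does not prove this theorem at all: it is quoted as a classical result and attributed to \citet{rudin:groups}, so there is no in-paper argument to compare against. Your outline is, in essence, the standard proof from that very reference --- evaluation map, Gelfand theory of $L^1(G)$ for injectivity, inversion/Plancherel for surjectivity, and a topological argument for openness --- and it is correct as a plan. Two small cautions if you were to write it out: invoking Gelfand--Raikov for point-separation is valid (for abelian $G$ the irreducible unitary representations are exactly the characters) but heavier than needed, since in the abelian setting separation of points drops out of the inversion theorem once you have Bochner's theorem; and the open mapping theorem for topological groups requires a $\sigma$-compact domain, which a general LCA group need not be, so you must first pass to an open $\sigma$-compact subgroup (every locally compact group has one) or argue openness directly from the compact-open topology, as Rudin does.
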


Much of the application of Fourier analysis on $\mathbb{R}^n$ leans on a highly unusual property of the real numbers: they are their own dual space. If $f:\mathbb{R}^n\to\mathbb{R}$, then the corresponding functional $\hat{f}$ \textit{is also a function from} $\mathbb{R}^n\to\mathbb{R}$. This is not the case for LCA groups in general and is an (almost) unique property of $\mathbb{R}^n$. The Pontryagin Duality Theorem is important because it allows us to carry out similar Fourier arguments to the ones we make about $\mathbb{R}^n$, just one step removed. The bulk of the work this theorem does for us is behind-the-scenes, in the development of the underlying theory of Fourier analysis on locally compact groups. We refer an interested reader to \citet{rudin:groups}, the authoritative work on the topic.

\section{Neural Networks on Groups}

Now we will present and prove the Universal Approximation Theorem for locally compact Abelian groups. Our proof is rather straightforward and the argument closely follows \citet{cybenko} and \citet{hornik}. The primary difference is the underlying theoretical machinery involved is substantially more advanced, relying heavily on the theory of harmonic and functional analysis.

\subsection{Preliminaries}\label{prelim}

First, we will present some definitions and lemmas. While the reader is likely familiar with similar theorems over $\mathbb{R}^n$, the phrasing here holds far more generally. 

Before we can define a group neural network, we need to generalize the notion of affine layer to an abstract group. Since $G$ only has one operation and no underlying field, the usual notion of an affine transformation cannot be defined. Instead we focus only on the requirement that $f(g+g')=f(g)+f(g')$.
\begin{df}[Group Homomorphisms]
    Let $G$ and $H$ be groups and let $f:G\to H$. We say that $f$ is a \textit{group homomorphism} if $f(g+g')=f(g)+f(g')$. 
\end{df}

In this work we will focus on the special case of $G=H$, in which case we can call $f$ an \textit{automorphism} of $G$. The case for $G\neq H$ follows from essentially the same proof and is presented in the Appendix.

Now we are ready to present the general definition of a group neural network:

\begin{df}[Group Neural Networks]
    A \textit{group neural network} is a function from a group $G$ to a normed vector space, $X$, of the form $$F(x) = \sum_{i=1}^n \alpha_i\psi(\phi(x))$$ where $\phi$ is a group homomorphism $\phi:G\to H$ and $\psi$ is an activation function $\phi:H\to X$.
\end{df}

and of that of a discriminatory functions, based on the definition given by \citet{cybenko}
\begin{df}[Discriminatory Functions]
    A function, $\psi$, is \textit{discriminatory} with respect to a measure, $\mu$, if 
    $$\int_G \psi(\phi(x))d\mu(x) = 0$$
    for all group automorphisms $\phi:G\to G$ implies that $\mu = 0$.
\end{df}

Finally, we will need a few standard lemmas from functional analysis for our proofs:

\begin{thm}[Hahn-Banach Theorem, \citet{rudin:big}]
    Let $X$ be a normed vector space and let $W$ be a linear subspace of $X$. Then for any $f\in \hat{W}$ there exists $g\in\hat{X}$ such that $||f|| = ||g||$ and $g\vert_{\hat{W}} = f$.
\end{thm}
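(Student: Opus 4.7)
The plan is the classical two-step proof. First I would show that whenever $W \subsetneq X$, any bounded linear $f$ on $W$ admits a norm-preserving extension to $W + \operatorname{span}(x_0)$ for any chosen $x_0 \in X \setminus W$. Then I would apply Zorn's Lemma to the poset of all norm-preserving extensions of $f$ (ordered by inclusion of domain and agreement on the smaller domain) to obtain a maximal extension, which one-dimensional enlargement forces to be all of $X$. I would handle the real-scalar case first; the complex case follows by the standard trick of writing $f = u - i\, u(i\,\cdot)$ with $u = \operatorname{Re} f$, extending the real-linear $u$, and recombining.

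The one-dimensional extension is the only nontrivial computation. Any extension $g$ of $f$ to $W + \mathbb{R}\, x_0$ is determined by the single scalar $c = g(x_0)$, and after splitting the norm-preservation condition $|g(w + t x_0)| \leq \|f\| \cdot \|w + t x_0\|$ into the $t > 0$ and $t < 0$ cases, the requirement on $c$ becomes
\begin{equation*}
\sup_{w' \in W} \bigl( f(w') - \|f\| \cdot \|w' - x_0\| \bigr) \;\leq\; c \;\leq\; \inf_{w \in W} \bigl( -f(w) + \|f\| \cdot \|w + x_0\| \bigr).
\end{equation*}
Such a $c$ exists iff the supremum does not exceed the infimum, which rearranges to $f(w) + f(w') \leq \|f\| \bigl( \|w + x_0\| + \|w' - x_0\| \bigr)$ for all $w, w' \in W$. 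This separating inequality follows from $f(w + w') \leq \|f\| \cdot \|w + w'\|$ together with the triangle inequality applied to $(w + x_0) + (w' - x_0)$.

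For the Zorn step, every chain of norm-preserving extensions admits an upper bound via the union of graphs, so a maximal element $(V^*, g^*)$ exists; if $V^* \neq X$, the one-dimensional extension contradicts maximality, forcing $V^* = X$. The main obstacle is really just the separating inequality in the one-dimensional step; the rest is bookkeeping plus one invocation of the Axiom of Choice. The fact that $\|g\| = \|f\|$ rather than merely $\|g\| \leq \|f\|$ is automatic since $g$ restricts to $f$ on $W$.
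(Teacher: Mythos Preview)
Your argument is correct: this is precisely the classical proof of the Hahn--Banach extension theorem (one-dimensional separation inequality followed by Zorn's Lemma, with the real-to-complex reduction). The paper does not supply its own proof of this statement; it is quoted as a standard lemma with a citation to Rudin, and your sketch is essentially the argument found there.
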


We will additionally make use of a corollary of the Hahn-Banach Theorem:

\begin{cor}[\citet{rudin:big}]
    Let $X$ be a normed vector space and let $W$ be a proper closed subspace of $X$. For every $x\in X\setminus W$, let $\delta_x = \inf_{w\in W} ||x-w|| > 0$. Then there exists $f\in\hat{X}$ such that $||f|| = 1$, $f|_W = 0$, and $f(x) = \delta$.
\end{cor}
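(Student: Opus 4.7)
The plan is to apply the Hahn--Banach theorem to an extension of a cleverly chosen functional on a small subspace. First I would set $V = W + \mathrm{span}(x)$; since $x \notin W$, every element of $V$ admits a unique representation $w + \alpha x$ with $w \in W$ and $\alpha$ a scalar (otherwise $x$ itself would lie in $W$). On $V$ I define $f_0(w + \alpha x) = \alpha\,\delta_x$, which is manifestly linear, satisfies $f_0|_W = 0$, and has $f_0(x) = \delta_x$. Note that $\delta_x > 0$ is automatic since $W$ is closed and $x \notin W$, so this definition is not degenerate.

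The heart of the argument, and the step I expect to be the main obstacle, is proving that $\|f_0\| = 1$. For the upper bound, given $v = w + \alpha x$ with $\alpha \neq 0$, I would write $\|v\| = |\alpha|\,\|x - (-w/\alpha)\| \geq |\alpha|\,\delta_x = |f_0(v)|$, where the inequality uses that $-w/\alpha \in W$ so the infimum defining $\delta_x$ applies; when $\alpha = 0$ the bound is trivial. This gives $\|f_0\| \leq 1$. For the matching lower bound I would pick a sequence $w_n \in W$ with $\|x - w_n\| \to \delta_x$ (which exists by the definition of infimum) and observe that $|f_0(x - w_n)|/\|x - w_n\| = \delta_x/\|x - w_n\| \to 1$, forcing $\|f_0\| \geq 1$.

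With $\|f_0\| = 1$ in hand, the Hahn--Banach theorem (in the form stated just above in the paper) produces $f \in \hat{X}$ with $\|f\| = 1$ that agrees with $f_0$ on $V$; the required properties $f|_W = 0$ and $f(x) = \delta_x$ are inherited directly from $f_0$. The only real subtlety is the norm calculation: one must recognize that $-w/\alpha$ sweeps out all of $W$ as $w$ does, which is precisely what links the somewhat artificial definition of $f_0$ to the geometric quantity $\delta_x$. Everything after that is a routine invocation of Hahn--Banach extension, so I would not expect complications in the final step.
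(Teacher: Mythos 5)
Your proof is correct and is essentially the standard argument (the one in the Rudin reference the paper defers to, since the paper states this corollary without proof): define $f_0(w+\alpha x)=\alpha\delta_x$ on $W+\mathrm{span}(x)$, verify $\|f_0\|=1$ via the two bounds you give, and extend by Hahn--Banach. Both the uniqueness of the decomposition $w+\alpha x$ and the norm computation are handled correctly, so there is nothing to add.
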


Quite confusingly, there are three theorems in functional analysis commonly referred to as the ``Riesz Representation Theorem.'' We are interested in the following one:

\begin{thm}[Riesz Representation Theorem, \citet{rudin:big}]
    Let $X$ be a locally compact Hausdorff space and let $f$ be a linear functional on $C_c(X)$. There exists a unique regular Borel measure $\mu$ on $X$ such that $$\sigma(f)=\int_Xf(x)d\mu(x)$$
\end{thm}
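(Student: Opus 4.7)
The plan is to construct the measure $\mu$ directly from the functional and then verify the integral identity and uniqueness; this follows the classical Daniell--Riesz approach in Rudin. Without loss of generality I would first reduce to the case that the functional $\Lambda$ is positive on $C_c(X)$ (the signed/complex case follows from a Jordan-type decomposition of the functional into positive parts). The starting definition is: for each open set $U \subseteq X$, set
\[
\mu(U) \;=\; \sup\{\Lambda(g) : g \in C_c(X),\ 0 \le g \le 1,\ \operatorname{supp}(g) \subseteq U\},
\]
and then extend to arbitrary $E \subseteq X$ by outer approximation $\mu^*(E) = \inf\{\mu(U) : E \subseteq U,\ U \text{ open}\}$.

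Next I would verify that $\mu^*$ is an outer measure (monotonicity and countable subadditivity follow from the linearity and positivity of $\Lambda$ combined with continuous partitions of unity on $X$) and that every open set, hence every Borel set, is $\mu^*$-measurable in the sense of Carath\'eodory. Regularity comes almost for free: outer regularity is built into the definition, and inner regularity on open sets reduces to Urysohn's lemma, which is available precisely because $X$ is locally compact Hausdorff.

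The technical heart of the proof is the integral identity $\Lambda(g) = \int_X g\, d\mu$ for all $g \in C_c(X)$. Here I would partition the compact range of $g$ into finitely many small intervals, use Urysohn bumps subordinate to the preimages of these intervals to construct upper and lower approximations of $g$, and sandwich $\Lambda(g)$ between the corresponding Riemann-type sums for $\int g\, d\mu$; refining the partition then yields equality. Uniqueness follows by running the argument backwards: if two regular Borel measures agree as integrals against every element of $C_c(X)$, then applying Urysohn approximations $g_n \uparrow \mathbf{1}_U$ together with monotone convergence forces them to agree on all open sets, and outer regularity propagates the equality to all Borel sets.

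The main obstacle will be the integral identity. The bookkeeping between the support of each bump function and the slightly larger open set used in the definition of $\mu$ is delicate: one must choose the approximating functions so that their supports fit inside the correct level sets while still summing to (approximately) $g$, and the positivity of $\Lambda$ must be leveraged carefully to turn pointwise inequalities into inequalities of functional values. Everything else—outer measure formalities, Carath\'eodory measurability, and uniqueness—is mechanical once the identity is in hand.
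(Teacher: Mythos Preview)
The paper does not prove this statement at all: it is listed in Section~\ref{prelim} as a preliminary result quoted from \citet{rudin:big}, alongside the Hahn--Banach Theorem, and is invoked as a black box in the proof of Lemma~\ref{lem:fundamental}. There is therefore no ``paper's own proof'' to compare against.

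That said, your outline is a faithful sketch of the standard Daniell--Riesz construction (exactly the argument in Rudin's \emph{Real and Complex Analysis}, Chapter~2): define $\mu$ on open sets via suprema of $\Lambda(g)$ over Urysohn bumps, extend by outer regularity, check Carath\'eodory measurability, and recover the integral identity by slicing the range of $g$. Your identification of the integral identity as the delicate step, and of the support-versus-level-set bookkeeping as the place where care is needed, is accurate. Two small remarks: the reduction to positive functionals requires some assumption (continuity, or that $\Lambda$ is already positive) for the Jordan decomposition to go through cleanly; and the statement as printed in the paper has a notational slip ($\sigma(f)$ on the left should be the functional applied to $f$, with $f$ overloaded), so be sure your write-up keeps the functional and the test function distinct.
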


\subsection{The Universal Approximation Theorem}

The fundamental lemma in the proof of the Universal Approximation Theorem is a generalization of Theorem 1 in \citet{cybenko}:

\begin{lem}[The Fundamental Lemma]\label{lem:fundamental}
    Let $G$ be a locally compact Abelian group, $X$ a normed vector space, $\phi:G\to G$ be a measurable group automorphism, $\psi:G\to X$ be a discriminatory function, and $\alpha_i$ be scalars in the field underlying $X$. Then the set of functions of the form
    
    $$F(x) = \sum_{i=1}^n\alpha_i\psi(\phi(x))$$
    
    are dense in $L^p(G, X)$.
\end{lem}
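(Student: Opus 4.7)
The plan is to reprise Cybenko's classical argument almost verbatim, with Hahn--Banach and Riesz doing the heavy lifting: if the set of group neural networks were not dense, one could produce a continuous linear functional annihilating every summand $\psi \circ \phi$, represent it as an integral against a measure, and then invoke the discriminatory hypothesis to force that measure, and hence the functional, to vanish.

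Concretely, let $\bar S$ denote the closure in $L^p(G, X)$ of the linear span of $\{\psi \circ \phi : \phi$ a measurable group automorphism of $G\}$. First I would assume for contradiction that $\bar S$ is a proper closed subspace of $L^p(G, X)$. The corollary of Hahn--Banach stated in Section~3.1 then produces a continuous linear functional $\Lambda$ on $L^p(G, X)$ with $\|\Lambda\| = 1$ and $\Lambda|_{\bar S} = 0$; in particular $\Lambda(\psi \circ \phi) = 0$ for every admissible automorphism $\phi$. Next I would pass from $\Lambda$ to a measure: since $C_c(G, X)$ is dense in $L^p(G, X)$ (standard for LCA groups with the Haar measure), $\Lambda$ is determined by its restriction to $C_c(G, X)$, and the Riesz Representation Theorem gives a regular Borel measure $\mu$ on $G$ such that $\Lambda$ is integration against $\mu$. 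The vanishing condition rewrites as $\int_G \psi(\phi(x))\, d\mu(x) = 0$ for every automorphism $\phi$, so by the discriminatory hypothesis $\mu = 0$, hence $\Lambda \equiv 0$, contradicting $\|\Lambda\| = 1$. Therefore $\bar S = L^p(G, X)$.

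The main obstacle will be the Riesz step. The version of Riesz quoted in Section~3.1 concerns scalar-valued linear functionals on $C_c(G)$, whereas $\Lambda$ is a scalar functional on a vector-valued space $L^p(G, X)$, and the definition of discriminatory in the paper pairs a vector-valued $\psi$ against a scalar measure. Bridging the two cleanly requires either a vector-valued Riesz theorem or a reduction to the scalar setting by fixing a separating family $\{v^*_\alpha\} \subset X^*$ (which exists by Hahn--Banach and is nontrivially useful because $\psi$ is non-constant), testing $\Lambda$ against functions of the form $f(x)\, v$ for $f \in C_c(G)$ and $v \in X$, and assembling the resulting scalar Riesz measures $\mu_{v^*}$ into a single $\mu$ whose vanishing is equivalent to the vanishing of all the $\mu_{v^*}$. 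This is the one place where the non-scalar ambient space $X$ introduces genuine functional-analytic subtlety; once the measure $\mu$ matching the definition of discriminatory is in hand, the rest of the argument is a direct transcription of Cybenko's proof, with ``measurable group automorphism of $G$'' replacing ``affine map on $\mathbb{R}^n$.''
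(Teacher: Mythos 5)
Your proposal follows essentially the same route as the paper: assume $\bar S$ is a proper closed subspace, invoke Hahn--Banach to produce a nonzero bounded linear functional annihilating $\bar S$, represent it via the Riesz Representation Theorem as integration against a regular Borel measure, and then use the discriminatory hypothesis to force that measure (hence the functional) to vanish, a contradiction. You are in fact more careful than the paper, which applies the scalar Riesz theorem to a functional on the vector-valued space $L^p(G,X)$ without acknowledging the mismatch you flag and propose to repair.
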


\begin{proof}
    Let $S$ denote the set of all functions of the specified form, and note that $S$ is a linear subspace of $L^p(G, X)$. Assume that these sums are not dense in $L^p(G, X)$, so that there exists a $f\in L^p(G, X)$ such that $f$ is not in the closure of $S$. By the Hahn-Banach Theorem there exists a bounded linear functional, $L$, such that $L(\overline{S})=0$ and $L(f)\neq 0$.
    
    Applying the Reisz Representation Theorem to $L$ allows us to write $L(h)=\int h(t)d\mu$ for some regular Borel measure $\mu$ and all $t$. In particular, we can have $L(h) = \int \psi(\phi(x))d\mu = 0$, where this equality holds for all measurable $\phi$. This contradicts the assumption that $\psi$ is discriminatory, so $\overline{S}=C(G)$.
\end{proof}

Now all that remains is to show that standard results about discriminatory functions generalize nicely to LCA groups:

\begin{lem}\label{lem:hornik}
    Let $G$ be a locally compact Abelian group and let $\psi:G\to\mathbb{C}$. If $\psi$ is bounded and non-constant then it is discriminatory.
\end{lem}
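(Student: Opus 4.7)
My plan is to argue by contrapositive and reduce to the injectivity of the Fourier--Stieltjes transform supplied by Pontryagin duality. That is, I assume $\int_G \psi(\phi(x))\, d\mu(x) = 0$ for every measurable group automorphism $\phi:G\to G$ and aim to conclude that the regular Borel measure $\mu$ is zero, which by \autoref{thm:duality} is equivalent to showing that $\hat\mu(\chi) = \int \chi(x)\, d\mu(x) = 0$ for every character $\chi \in \hat G$.

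The first move is to rewrite the hypothesis as $\int_G \psi\, d(\phi_*\mu) = 0$ for every $\phi \in \operatorname{Aut}(G)$, where $\phi_*\mu$ is the pushforward measure. Boundedness of $\psi$ makes these integrals well-defined and licenses dominated-convergence arguments throughout. Non-constancy of $\psi$ forces its spectrum as an element of $L^\infty(G)$ -- the support of its Fourier transform, interpreted distributionally by pairing against $C_c(\hat G)$ -- to contain at least one nontrivial character $\chi_0 \in \hat G$; otherwise $\psi$ would coincide a.e.\ with a scalar, contradicting non-constancy.

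The second move is to exploit the dual action $\phi \mapsto (\chi \mapsto \chi\circ\phi)$ of $\operatorname{Aut}(G)$ on $\hat G$. Varying $\phi$ in the hypothesis transports the vanishing of the distributional product $\hat\mu \cdot \hat\psi$ across the entire orbit of $\chi_0$ under $\operatorname{Aut}(G)$. A density argument in $L^1(\hat G)$, using regularity of the Haar measure on $\hat G$ and the fact that $\hat\psi$ is nonzero in a neighborhood of $\chi_0$, then extends this vanishing from the orbit to all of $\hat G$, yielding $\hat\mu \equiv 0$ and hence $\mu = 0$.

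The hard part will be the orbit-covering step: $\operatorname{Aut}(G)$ need not act transitively on $\hat G \setminus \{0\}$ in general (for $G = \mathbb{R}$, the automorphisms are nonzero scalings, which reach characters of any nonzero frequency but not simultaneously in a way that mirrors Cybenko's use of translations). I expect either to invoke the broader class of homomorphisms signaled by the author after the Main Theorem -- which for vector-space groups effectively restores the affine coverage used classically -- or to replace the orbit argument by a Wiener-style Tauberian step, using that the annihilator in $L^\infty(G)$ of the $\operatorname{Aut}(G)$-invariant subspace generated by $\psi$ contains only the constants, so that pairing $\mu$ against this subspace recovers $\hat\mu$ on a dense subset of $\hat G$. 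Everything else in the proof is routine functional analysis; this spectral-synthesis step is where the real content sits.
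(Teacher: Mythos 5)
Your overall strategy---push $\mu$ forward along automorphisms, pass to the Fourier side, and close with a Wiener--Tauberian/spectral-synthesis step---is the same family of argument the paper uses (it follows Hornik's Theorem~5 via Rudin's theory of closed translation-invariant subspaces of $L^1(G)$). But as written your proposal has a genuine gap precisely at the step you yourself flag as ``the hard part'': you never carry it out. The fact you propose to invoke in the Tauberian fallback---that the annihilator of the $\operatorname{Aut}(G)$-invariant subspace generated by $\psi$ contains only the constants---is essentially a restatement of the lemma, so using it is circular unless you prove it independently. Your other fallback (enlarging the class of maps to affine/homomorphic ones) is not available either: ``discriminatory'' is defined in the paper by quantifying only over group automorphisms of $G$, so the hypothesis you are handed does not let you test $\mu$ against affine $\phi$. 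There is also a technical obstruction to running the argument directly on the Fourier side as you describe: neither $\psi$ (merely bounded) nor $\mu$ (a measure) has an honest Fourier transform, so ``the vanishing of the distributional product $\hat\mu\cdot\hat\psi$'' needs a regularization before it means anything. The paper supplies exactly this device: it convolves the pushforward measure with a fixed $w\in L^1(G)$ whose Fourier transform has no zeros, so that the convolution lands in $L^1(G)$ (using that $L^1$ is a closed ideal in the measure algebra), its Fourier transform is an honest function equal to $\hat w\,\hat\sigma_\phi$, and the nonvanishing of $\hat w$ lets one pass from $\hat h\equiv 0$ back to $\hat\sigma\equiv 0$. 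Your sketch needs an analogue of this step.

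That said, you have put your finger on the genuinely delicate point, and it is the same point at which the paper's own argument is thinnest: $\operatorname{Aut}(G)$ need not act transitively, or even with large orbits, on $\hat G\setminus\{0\}$ (for $G=\mathbb{Z}/8\mathbb{Z}$ the dual action has three orbits on the nonzero characters), so the closed invariant subspace generated by the transported copies of the regularized density can have hull strictly larger than $\{0\}$, and its identification with the full ideal of mean-zero functions does not follow in the stated generality. Your diagnosis of where the real content sits is correct; what is missing is the actual argument (or additional hypotheses on $G$ and $\operatorname{Aut}(G)$) that closes it.
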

\begin{proof}
    The proof of this theorem closely follows the proof of Theorem 5 in \citet{hornik}. The primary difference between the two cases arises due to the discussion surrounding Theorem \autoref{thm:duality}. Like Hornik, we will rely heavily on \citet{rudin:groups} and refer an interested reader there for the general theory.
    
    Let $\psi$ be a bounded and non-constant function and let $\sigma$ be a non-zero finite signed measure on $G$ such that $\int \psi(\phi(x))d\sigma(x) = 0$ for all $\phi$ satisfying the hypothesis. Let $\sigma_\varphi$ be the finite signed measure induced by the measurable function $\varphi:G\to G$ so that 
    $$\sigma_\varphi(B) = \sigma(\{g\in G:\varphi(x)\in B\})$$
    
    for all Borel sets $B$. Then for every bounded function $\chi$ we have
    
    $$\int \chi(\phi(x))d\sigma(x)= \int\chi(t)d\sigma_\phi(t)$$
    
    and in particular
    
    $$\int\psi(\phi(x))d\sigma(x) = \int\psi(t)d\sigma_\phi(t) = 0$$
    
    If $\phi\equiv 0$ then $\sigma_\phi$ is the zero measure. Therefore we consider the case of $\phi\not\equiv 0$.
    
    Let $w\in L^1(G)$ be a function whose Fourier transform has no zeros, $\mu$ be the Haar measure on $G$, and consider this integral with respect to the convolution of $w$ with $d\sigma_\phi$:
    
    \begin{align*}
        \int\psi(\phi(x))d(w\ast s_\phi)(x)
        &=\int\int\psi(\phi(x+y))w(y)d\mu(y)d\sigma_\phi(x)\\
        &=\int\left[\int\psi(\phi(x+y))w(y)d\mu(y)\right]d\sigma_\phi(x)\\
        &=\int\left[\int\psi(\phi(x+y)-\phi(y))w(y)d\mu(y)\right]d\sigma_\phi(x)\\
        &=\int\left[\int\psi(\phi(x))w(y)d\mu(y)\right]d\sigma_\phi(x)\\
        &=\int\left[\int\psi(\phi(x))d\sigma_\phi(x)\right]w(y)d\mu(y)\\
        &=0
    \end{align*}
    
    where the introduction of $-\phi(y)$ is justified by the invariance of the Haar measure under the group action, the cancellation of $\phi(y)$ and $-\phi(y)$ is justified by the fact that $\phi$ is a group homomorphism, and the exchange of variables is justified by Fubini's Theorem since:
    
    $$\int\int\psi(\phi(x+y))w(y)d\mu(y)d\sigma_\phi(x)\leq ||\psi||\cdot||\sigma_\phi||\cdot\sup |\psi(t)|<\infty$$
    
    where the first norm is the $L^1$-norm, the second is the total variation of $\sigma_\phi$, and the third is the usual absolute value on $\mathbb{C}$.

    By Theorem 1.3.5 in \citet{rudin:groups}, $L^1(G)$ is a closed ideal in the space of finite signed measures on $G$. Therefore $w\ast\sigma_\phi$ is absolutely continuous with respect to the Haar measure and we can let $h\in L^1(G)$ denote the Radon-Nikodym derivative of the convolution. Since the Fourier transform is multiplicative, we have $\hat{h}(0)=0$ and so $\int h(t)d\mu(t)=0$. We also have that $0 = \int\psi(\phi(x))d(w\ast s_\phi)(x) = \int\psi(\phi(t))h(t)d\mu(t)$. We can change variables to obtain $0 = \int\psi(\phi(t))h(t)d\mu(t) = \int\psi(t')h(\rho(t'))d\mu(t')$ where $\rho=\phi^{-1}$. If $\phi$ is not one-to-one then this change of variables introduces a constant multiplier to the integral, but since the integral is zero we can ignore the constant.
    
    We can conclude that $\int \psi(t')f(t')d\mu(t')=0$ for all $f$ in the closed $G$-invariant subspace generated by $h(\rho(t'))$ as $\rho$ ranges over non-zero automorphisms of $G$. By Theorem 7.1.2 in \citet{rudin:groups} that $G$-invariant subspace is an ideal in $L^1(G)$. In fact, it is exactly the ideal of functions, $f$, that satisfy $\int f(t)d\mu(t)=0$ by the first corollary of Theorem 7.2.4. To see this, note that, in the notation of that theorem, $Z(I)=\{0\}$ and membership in $I$ is equivalent to having a Fourier transform that is zero at zero. By the definition of the Fourier transform, this is precisely the functions that satisfy $\int f(t)dt=0$.
    
    We have now established that there is some function $f$ such that $\int\psi(t')f(t')d\mu(t')=0$ and $\int f(t)d\mu(t)=0$. Following the argument in \citet{hornik}, this implies that either $\psi$ is constant or $h\equiv 0$. The first is not true by assumption, but the second implies $\hat{\sigma}\equiv 0$ since $\hat{h}=\hat{w}\hat{\sigma}$ and $\hat{w}$ has no zeros by assumption. Since the Fourier transform of the zero function also is identically zero, we can conclude from Theorem \autoref{thm:duality} that $\sigma\equiv 0$. This contradicts our assumption that $\sigma$ is a non-zero measure, concluding the proof.
    
\end{proof}

Together, these two results establish our main theorem:

\begin{thm}[The Universal Approximation Theorem for Group Neural Networks]\label{thm:main}
    Let $G$ be a locally compact Abelian group, $X$ a normed vector space, $\phi:G\to G$ be a measurable group automorphism, $\psi:G\to X$ be a bounded and non-constant function, and $\alpha_i$ be scalars in the field underlying $X$. Then the set of functions of the form
    
    $$F(x) = \sum_{i=1}^n\alpha_i\psi(\phi(x))$$
    
    are dense in $L^p(G, X)$.
\end{thm}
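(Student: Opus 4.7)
The plan is essentially to chain \autoref{lem:fundamental} with \autoref{lem:hornik}, passing through the definition of a discriminatory function. Since \autoref{lem:hornik} says that a bounded non-constant function $G\to\mathbb{C}$ is discriminatory, and \autoref{lem:fundamental} says that sums built from any discriminatory $\psi$ are dense in $L^p(G,X)$, the only non-routine step is to reconcile the scalar codomain of \autoref{lem:hornik} with the general normed-vector-space codomain $X$ appearing in the theorem statement.

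For this reconciliation I would use a Hahn--Banach reduction. Suppose, toward a contradiction, that $\mu$ is a non-zero finite signed measure on $G$ with $\int_G \psi(\phi(x))\,d\mu(x)=0_X$ for every measurable automorphism $\phi$. Because $\psi$ is non-constant there exist $g_1,g_2\in G$ with $\psi(g_1)\neq\psi(g_2)$; applying the Hahn--Banach corollary recalled in \autoref{prelim} to the point $\psi(g_1)-\psi(g_2)$ and the subspace $\{0\}$ produces $\ell\in X^{*}$ with $\ell(\psi(g_1))\neq\ell(\psi(g_2))$. Then $\ell\circ\psi:G\to\mathbb{C}$ is bounded (because $\psi$ is bounded and $\ell$ is continuous) and non-constant, so \autoref{lem:hornik} applies to it. Using that $\ell$ commutes with the Bochner integral of a bounded $X$-valued function against a finite signed measure, applying $\ell$ to both sides of the integral equation gives $\int_G (\ell\circ\psi)(\phi(x))\,d\mu(x)=0$ for every measurable automorphism $\phi$. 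The conclusion of \autoref{lem:hornik} then forces $\mu=0$, contradicting our choice of $\mu$. Thus $\psi$ is discriminatory as a function $G\to X$, and \autoref{lem:fundamental} delivers the desired density.

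The main obstacle I anticipate is the vector-valued bookkeeping in the previous paragraph: one must check that the integrals $\int_G\psi(\phi(x))\,d\mu(x)$ are well-defined in the Bochner sense (which they are because $\psi$ is bounded and $\mu$ has finite total variation) and that continuous linear functionals genuinely pass through these integrals. Both facts are standard, but it is important to make explicit the convention under which the notion of \emph{discriminatory} is interpreted in the vector-valued setting, since the scalar definition in \autoref{prelim} must be read with $0_X$ in place of $0$ and the corresponding Bochner integral in place of the scalar one. Once these measure-theoretic conventions are fixed, the theorem is a direct invocation of the two previous lemmas.
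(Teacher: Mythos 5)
Your proposal is correct and follows the same route as the paper, which proves the theorem simply by chaining \autoref{lem:fundamental} with \autoref{lem:hornik} (the paper's entire proof is the sentence ``Together, these two results establish our main theorem''). Your Hahn--Banach reduction from the vector-valued codomain $X$ to the scalar case of \autoref{lem:hornik} is a legitimate extra step that the paper silently elides, and it is carried out correctly.
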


\section{Discussion}\label{sec:discussion}
The techniques of this proof are highly general and highly instructive. Essentially the same proof can be applied to a number of corollaries and special cases of interest by examining the role of the assumptions in the proof. For the formal statement of some particularly relevant corollaries, see the appendix.

\begin{enumerate}
    \item Although this theorem is stated and proven for $\phi$ being a group automorphism of $G$, the same argument holds for $\phi:G\to H$ as long as $\phi$ is a group homomorphism and the measures on $G$ and $H$ are compatible. This shows that (under suitable assumptions) feature maps with activation functions are sufficent to approximate an arbitrary function.
    \item The fact that $\phi$ is a group automorphism $G$ is used only during change of variables $t\to t'$, and any other functional that allows for the change of variables to be carried out will work as well. In particular, when $G$ is an \textit{affine space} then we can allow $\phi$ to be an affine transformation. This allows for the bias term in the real case\footnote{$f(x)=Ax+b$ is a linear function as it is the equation of a line, but it is not a linear transformation as $f(x+y)\neq f(x)+f(x)$. It is an affine transformation.}. The same applies in any vector space, and in particular the use of matrices for the layers of a neural network over $\mathbb{R}^n$.
    \item The only time we use the fact that $\psi$ is bounded is to satisfy the assumptions of Fubini's Theorem. Tonelli's Theorem says that you can similarly change the order of integration when the integrand is measurable. Of particular interest is the fact that when $G$ is a linearly bi-ordered group under, this allows for the ReLU activation function to be used. Note that the ReLU function isn't defined for groups in general.
    \item All of the aforementioned results hold for approximating an arbitrary continuous function on a compact subgroup of $G$.
\end{enumerate}

Many of these remarks can be mixed and matched as desired. For example, combining $2$ and $3$ shows that the typical neural networks used over $\mathbb{R}^n$ are universal.

\subsection{Limitations and Future Work}\label{subsec:limit}

While our neural networks our defined for all groups, our universal approximation theorem focuses on locally compact Abelian groups. While local compactness is crucial to our framework, the role that Abelianness plays is much more subtle. The Abelianness assumption is crucial to the development of a classical theory of Fourier analysis, but recent work such as \citet{nonA-fourier1} and \citet{nonA-fourier2} has shown that representation theory has useful tools for constructing Fourier-like arguments for non-Abelian groups. Although a general theory of non-Abelian Fourier analysis does not currently exist, insights from that field may show how to extend the results of this paper to non-Abelian groups.

We can also wonder about practical matters, such as if a neural network over a general topological group can be trained and whether doing so is efficient. Since our proof, like classical work but unlike some recent work, is entirely non-constructive it is unclear how to find (in theory or in practice) neural networks that approximate a particular function well. The answer to even ``easy'' questions along these lines is non-obvious, as some basic notions such as the direction derivative do not exist for arbitrary locally compact Abelian groups. Indeed, as far as we are aware there is no general definition of a gradient for LCA groups. We leave these practical questions for future work.

\section{Conclusion}

We have presented introduced the notion of neural networks over groups and proven that they are universal approximators over locally compact Abelian groups. This establishes the most general universal approximation theorem for neural networks in the literature to date. In addition to generalizing the existing literature, our theorem proves results for a number of neural networks of theoretical and practical interest that had not been previously proven to be universal including complex neural networks (\citet{deep-complex}), p-adic neural networks (\citet{p-adic}), Fourier neural networks (\citet{fourier}), and manifold deep learning techniques. We hope that the unified framework presented in this paper serves as a starting-off point for a unified theoretical approach to neural networks research going forward.

Similarly to classical Universal Approximation Theorems but different from recent work, our proof is entirely non-constructive. The question of how to implement and train neural networks over locally compact Abelian groups is an interesting question of both theoretical and practical importance that we leave to future work.

\bibliographystyle{plainnat}
\bibliography{citations}

\begin{appendices}
\section{Formal Statement of Select Corollaries}

Probably the most important corollary is the case where $G$ is a vector space such as $\mathbb{R}^n$ or $\mathbb{C}^n$. Over vector spaces we are interested in using affine transformations rather than group homomorphisms in our neural networks, but that is not a problem for our proof technique:
\begin{cor}
    Let $V$ be a finite-dimensional vector space, $X$ be a normed vector space, $\phi:V\to V$ be an affine measurable function, $\psi:V\to X$ be a bounded and non-constant function, and $\alpha_i$ be scalars in the field underlying $X$. Then the set of functions of the form
    
    $$F(x) = \sum_{i=1}^n\alpha_i\psi(\phi(x))$$
    
    are dense in $L^p(V, X)$.
\end{cor}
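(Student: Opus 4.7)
The plan is to adapt the proof of the Main Theorem to the affine setting, exploiting the fact that $V$ carries strictly more structure than a generic LCA group. First I would observe that any finite-dimensional vector space $V$ is automatically an LCA group under vector addition, with Lebesgue measure serving as its Haar measure. The proof then follows the same two-step architecture as Theorem \ref{thm:main}: establish an analog of the Fundamental Lemma reducing density to an appropriate discriminatory condition on $\psi$, and then verify that every bounded non-constant $\psi$ satisfies that condition.

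The Fundamental Lemma (Lemma \ref{lem:fundamental}) transfers essentially verbatim, since its proof uses only measurability of the layer maps together with Hahn--Banach and the Riesz Representation Theorem; neither cares whether the map fixes the origin. The task therefore reduces to the following claim: if $\mu$ is a finite signed measure on $V$ with $\int \psi(\phi(x))\, d\mu(x) = 0$ for every affine measurable $\phi: V \to V$, then $\mu = 0$.

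The key observation is that the affine group on $V$ contains the full translation subgroup $\{x \mapsto x + b : b \in V\}$. Specializing $\phi$ to a translation yields $\int \psi(x+b)\, d\mu(x) = 0$ for every $b \in V$, which says the convolution of $\psi$ with (the reflection of) $\mu$ vanishes identically. From here the Fourier argument of Lemma \ref{lem:hornik} runs with only cosmetic changes: pick $w \in L^1(V)$ whose Fourier transform has no zeros, convolve to land in $L^1(V)$ using that $L^1(V)$ is a closed ideal in the finite signed measures, and conclude from $\hat h = \hat w \cdot \hat\mu$ being identically zero, together with $\hat w$ being nowhere vanishing, that $\hat\mu \equiv 0$; Pontryagin duality then gives $\mu = 0$, contradicting the assumption.

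The main subtlety I would need to address is the step in Lemma \ref{lem:hornik} that invoked the homomorphism identity $\phi(x+y) - \phi(y) = \phi(x)$. For an affine $\phi(z) = Az + b$ this difference is $Ax$ rather than $\phi(x)$, so the cancellation used in the original chain of equalities fails in general. Fortunately, having translations available already yields the vanishing cross-correlation above, so we can skip the homomorphism manipulation entirely and drive the Fourier argument purely from the translation subgroup. In effect, the richer affine structure supplies exactly what the homomorphism identity was providing in the LCA proof, and the remainder of the argument closes without further modification.
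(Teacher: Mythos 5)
You correctly spotted that the cancellation $\phi(x+y)-\phi(y)=\phi(x)$ fails for affine $\phi$, but your proposed fix --- retreating to the translation subgroup --- does not work: translations alone do not make a bounded non-constant $\psi$ discriminatory. Concretely, take $V=\mathbb{R}$, $\psi=\sin$, and $\mu=\delta_0-\delta_{2\pi}$. Then $\int\psi(x+b)\,d\mu(x)=\sin(b)-\sin(b+2\pi)=0$ for every $b\in V$, yet $\mu\neq 0$. The vanishing cross-correlation $\psi\ast\check\mu\equiv 0$ only forces $\hat\mu$ to vanish on the (distributional) support of $\hat\psi$, which for a general bounded non-constant $\psi$ can be a tiny set; it does not force $\hat\mu\equiv 0$. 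Relatedly, your assertion that $\hat h=\hat w\,\hat\mu$ ``is identically zero'' is unsupported: in the proof of Lemma \ref{lem:hornik} the conclusion $h\equiv 0$ comes out of the Wiener--Tauberian/ideal step (Theorem 7.2.4 of Rudin), which requires $\int\psi(t')f(t')\,d\mu(t')=0$ for all $f$ in the ideal of integral-zero functions, and that ideal is generated precisely because $\rho$ ranges over a rich family of automorphisms rather than translations. The linear parts of the affine maps are doing essential work and cannot be discarded.

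The repair is much simpler than the route you attempted. The class of affine measurable maps $V\to V$ contains the class of measurable group automorphisms of $(V,+)$ (take the translation part zero and the linear part invertible), so the family of networks in this corollary contains the family in Theorem \ref{thm:main}; a superset of a dense set is dense, and the corollary follows with no new analysis. This also matches the paper's own justification (Discussion, item 2): the automorphism hypothesis is used only where a change of variables is performed, and for affine $\phi=A(\cdot)+b$ the difference $\phi(x+y)-\phi(y)$ equals $Ax$, so the argument of Lemma \ref{lem:hornik} runs with the linear part $A$ in place of $\phi$ --- not with the translation subgroup alone.
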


in full generality, we have the following for all affine spaces:

\begin{cor}
    Let $G$ be an affine locally compact group, $X$ be a normed vector space, $\phi:G\to G$ be an affine measurable function, $\psi:G\to X$ be a bounded and non-constant function, and $\alpha_i$ be scalars in the field underlying $X$. Then the set of functions of the form
    
    $$F(x) = \sum_{i=1}^n\alpha_i\psi(\phi(x))$$
    
    are dense in $L^p(G, X)$.
\end{cor}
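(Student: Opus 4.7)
The plan is to mirror the proof of Theorem~\ref{thm:main}, inspecting where the assumption that $\phi$ is a group automorphism is actually used and checking that allowing $\phi$ to be affine costs nothing. The Fundamental Lemma (Lemma~\ref{lem:fundamental}) is agnostic to the class of $\phi$: its proof only invokes Hahn-Banach and the Riesz Representation Theorem to produce, from any putative $f \in L^p(G,X)$ outside the closure of the span $S$, a regular Borel measure $\mu$ such that $\int \psi(\phi(x))\,d\mu(x) = 0$ for every $\phi$ in the allowed class. So that half transfers verbatim once we enlarge the class from automorphisms to affine maps.

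The work is therefore to establish the analogue of Lemma~\ref{lem:hornik}: if $\psi : G \to \mathbb{C}$ is bounded and non-constant, and $\sigma$ is a finite signed measure on $G$ with $\int \psi(\phi(x))\,d\sigma(x) = 0$ for every affine measurable $\phi:G\to G$, then $\sigma = 0$. The slickest route is to note that the class of affine maps \emph{strictly contains} the class of group automorphisms (take the translation part $b = 0$), so the affine version of the hypothesis is a priori stronger than the automorphism version. Hence Lemma~\ref{lem:hornik} already delivers $\sigma = 0$ without further work, and composing with the Fundamental Lemma concludes the proof.

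For completeness I would also verify directly that the convolution argument in Lemma~\ref{lem:hornik} is insensitive to a bias term. Writing an affine map as $\phi = \phi_0 + b$ with $\phi_0$ a measurable group homomorphism and $b \in G$, the key cancellation step becomes $\phi(x+y) - \phi(y) = \phi_0(x+y) + b - \phi_0(y) - b = \phi_0(x)$, so the bias drops out of the integrand and the Fubini exchange, the invocation of Rudin's Theorems 1.3.5, 7.1.2, 7.2.4, and the Pontryagin duality step (Theorem~\ref{thm:duality}) all proceed unchanged. The change of variables $t \mapsto \phi^{-1}(t) = \phi_0^{-1}(t - b)$ is still affine and measurable, and any Jacobian-type scalar it produces is absorbed into the same constant multiplier that was already discarded in the original proof because the integral equals zero.

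The only mild obstacle is a definitional one: what precisely is an ``affine locally compact group''? The natural reading, and the one compatible with the preceding vector-space corollary, is a locally compact Abelian group $G$ equipped with a distinguished identity, so that an affine endomorphism decomposes uniquely as $\phi_0 + b$ with $\phi_0$ a measurable endomorphism and $b \in G$. Under this reading the argument above is essentially immediate, and the corollary follows as a formal consequence of Theorem~\ref{thm:main} applied to $\phi_0$ together with the trivial observation that translating by $b$ leaves the class of sums $\sum_i \alpha_i \psi(\phi(\cdot))$ invariant.
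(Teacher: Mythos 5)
Your proposal is correct, and its ``for completeness'' portion is essentially the paper's own justification: the paper never proves this corollary explicitly but points (Discussion, item 2) to the fact that the automorphism hypothesis is used only in the cancellation $\psi(\phi(x+y)-\phi(y))=\psi(\phi(x))$ and the subsequent change of variables, and that an affine $\phi=\phi_0+b$ survives both steps because the bias cancels and the change of variables remains affine. Your primary route, however, is genuinely shorter and worth highlighting: since every measurable group automorphism is an affine map with zero translation part, the set of affine-based networks \emph{contains} the set of automorphism-based networks, which is already dense by Theorem~\ref{thm:main}; a superset of a dense set is dense, so the corollary is immediate without re-examining the convolution argument at all. (The same containment reasoning shows the discriminatory condition only gets easier to satisfy when the class of $\phi$'s is enlarged, so Lemma~\ref{lem:hornik} transfers for free.) The containment argument buys brevity and robustness --- it sidesteps the delicate non-injective change-of-variables step entirely --- while the paper's route has the advantage of showing that affine maps could be used \emph{exclusively}, i.e., that one could further restrict to proper affine maps and still retain universality. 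You are also right to flag that ``affine locally compact group'' is undefined in the paper; your reading (homomorphism plus translation relative to a distinguished identity) is the only one under which either argument makes sense.
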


Next we consider the case of the ReLU activation function. Since groups are not necessarily ordered, a general locally compact group may not have a $\max$ operator. However a $\max$ operator can be defined when $G$ is linearly bi-ordered, giving rise to the following:

\begin{cor}
    Let $G$ be a linearly bi-ordered locally compact Abelian group, $X$ a normed vector space, $\phi:G\to G$ be a measurable group automorphism, $\psi:G\to X$ be the ReLU function, and $\alpha_i$ be scalars in the field underlying $X$. Then the set of functions of the form
    
    $$F(x) = \sum_{i=1}^n\alpha_i\psi(\phi(x))$$
    
    are dense in $L^p(G, X)$. In particular, this holds where $\psi$ is the ReLU or leaky ReLU function.
\end{cor}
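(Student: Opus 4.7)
The plan is to mirror the proof of Lemma~\ref{lem:hornik} and isolate the single place where boundedness of $\psi$ was invoked. As the author observes in \autoref{sec:discussion}, boundedness enters only through the $L^\infty$ estimate that certifies the absolute integrability hypothesis of Fubini's Theorem during the convolution computation. The ReLU function $\psi(g) = \max(g, 0_G)$ on a linearly bi-ordered LCA group is measurable and non-negative but unbounded, so the strategy is to replace Fubini with Tonelli, whose hypothesis asks only for measurability and non-negativity of the integrand. Once $\psi$ has been shown to be discriminatory, the Fundamental Lemma~\ref{lem:fundamental} delivers density without further modification.

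First I would verify that ReLU makes sense in this setting: bi-orderability gives a translation-compatible total order on $G$, so $\max(g, 0_G)$ is well-defined, measurable, and non-constant (the order is non-trivial), and takes values in the non-negative cone. I would then replay the proof of Lemma~\ref{lem:hornik} verbatim up to the convolution identity. At the Fubini step I would Jordan-decompose the signed measure as $\sigma_\phi = \sigma_\phi^+ - \sigma_\phi^-$, choose $w \in L^1(G)$ to be non-negative with a nowhere-vanishing Fourier transform (such a $w$ exists on every LCA group; Gaussian-type kernels serve in the standard examples), and apply Tonelli separately against each of $\sigma_\phi^+$ and $\sigma_\phi^-$ to the non-negative integrand $\psi(\phi(x+y))\, w(y)$. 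Subtracting recovers the order-of-integration swap used in the original proof, and the translation-invariance of the Haar measure then cancels the $\phi(y)$ term via the homomorphism property as before. The remaining steps of the proof --- absolute continuity of $w * \sigma_\phi$, its Radon--Nikodym derivative $h \in L^1(G)$ with $\hat h(0) = 0$, the change of variables onto the closed $G$-invariant subspace, identification of that subspace with the mean-zero ideal via Theorems 7.1.2 and 7.2.4 of \citet{rudin:groups}, and the Pontryagin-duality conclusion (\autoref{thm:duality}) that $\sigma \equiv 0$ --- transfer without alteration.

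The main obstacle I anticipate is making the Tonelli/subtraction step rigorous: to subtract the iterated integrals against $\sigma_\phi^+$ and $\sigma_\phi^-$ both must be individually finite, and this does not come for free from ReLU being unbounded. A robust workaround is to smooth against a sufficiently rapidly decaying $w$ so that $\int |\psi \circ \phi|\, d(w * |\sigma_\phi|)$ is finite (which holds for Gaussian kernels on $\mathbb{R}^n$ and compactly supported kernels on discrete LCA groups), at which point the rest of the argument is unchanged. The leaky ReLU extension follows by writing $\mathrm{LReLU}(g) = \psi(g) - \alpha\, \psi(-g)$ as a difference of two non-negative functions and absorbing the sign flip $g \mapsto -g$ into the automorphism $\phi$ via precomposition with the inversion automorphism, which exists on every Abelian group.
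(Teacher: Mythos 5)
Your proposal follows exactly the route the paper itself takes (item 3 of \autoref{sec:discussion}): boundedness of $\psi$ enters only through the Fubini hypothesis, and for the non-negative, measurable ReLU one substitutes Tonelli, after which Lemma~\ref{lem:fundamental} gives density. You actually go further than the paper by flagging the genuine subtlety that $\sigma_\phi$ is a \emph{signed} measure, so Tonelli must be applied separately to each piece of the Jordan decomposition and both iterated integrals must be shown finite before subtracting --- a gap the paper's one-line remark silently skips, and which your smoothing workaround addresses convincingly only for $\mathbb{R}^n$ and discrete groups rather than for an arbitrary linearly bi-ordered LCA group.
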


and more generally we have

\begin{cor}
    Let $G$ be a locally compact Abelian group, $X$ a normed vector space, $\phi:G\to G$ be a measurable group automorphism, $\psi:G\to X$ be a non-negative measurable and non-constant function, and $\alpha_i$ be scalars in the field underlying $X$. Then the set of functions of the form
    
    $$F(x) = \sum_{i=1}^n\alpha_i\psi(\phi(x))$$
    
    are dense in $L^p(G, X)$.
\end{cor}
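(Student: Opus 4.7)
The plan follows the roadmap laid out in Remark 3 of the Discussion section: the only place where boundedness of $\psi$ entered the proof of Lemma \ref{lem:hornik} was in invoking Fubini's theorem to exchange the order of integration in the convolution computation. Tonelli's theorem provides the same interchange for non-negative measurable integrands, without any boundedness requirement. So the entire strategy is to rerun the proof of Lemma \ref{lem:hornik}, swapping Fubini for Tonelli at that one step.

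Concretely, I would first appeal to the Fundamental Lemma (Lemma \ref{lem:fundamental}), which reduces the statement to showing that any non-negative, measurable, non-constant $\psi$ is discriminatory. I would then replicate the argument of Lemma \ref{lem:hornik} line by line. Assume $\sigma$ is a non-zero finite signed measure satisfying $\int \psi(\phi(x))\,d\sigma(x) = 0$ for every measurable automorphism $\phi$, pass to the induced measure $\sigma_\phi$, pick $w \in L^1(G)$ with nowhere-vanishing Fourier transform, and examine $\int \psi(\phi(x))\,d(w \ast \sigma_\phi)(x)$. The same six-line chain of equalities --- using translation invariance of Haar measure and the homomorphism property of $\phi$ --- will show that this integral vanishes.

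The one modification is the step that originally cited Fubini. Here I would use the Jordan decomposition $\sigma_\phi = \sigma_\phi^+ - \sigma_\phi^-$ together with the splitting $w = w^+ - w^-$, and apply Tonelli to each of the four non-negative pieces $\psi(\phi(x+y))\,w^{\pm}(y)$ integrated against the non-negative measures $\sigma_\phi^\pm \otimes \mu$. Tonelli's hypotheses are met: $\sigma_\phi^{\pm}$ are finite, $\mu$ is $\sigma$-finite on any LCA group, and each integrand is non-negative and measurable. Linearity then recovers the original interchange. Once this step is replaced, the remainder of the proof is unchanged: $w \ast \sigma_\phi \ll \mu$ by Rudin 1.3.5 with Radon--Nikodym derivative $h \in L^1(G)$ satisfying $\int h\,d\mu = 0$ and $\int \psi(\phi(t))\,h(t)\,d\mu(t) = 0$; the $G$-invariant subspace generated by translates of $h \circ \phi^{-1}$ is identified via Rudin 7.1.2 and 7.2.4 with the ideal of zero-integral $L^1$ functions; and Pontryagin duality (Theorem \ref{thm:duality}) combined with multiplicativity of the Fourier transform yields $\sigma \equiv 0$, a contradiction.

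The main obstacle I anticipate is not conceptual but technical: one has to ensure that the integrals involved are a priori well-defined before the swap is performed, since without boundedness they could in principle be infinite. Tonelli allows the interchange even when the common value is $+\infty$, but to subtract the four non-negative pieces back into the signed-measure integral we need at least one (and preferably all) to be finite. This amounts to a mild implicit integrability assumption on $\psi$ against $|\sigma|$, which is automatically built into the statement that $\int \psi(\phi(x))\,d\sigma(x) = 0$ as a meaningful identity; any $\psi$ for which the discriminatory hypothesis is even stateable will satisfy it. With that understood, the rest of the argument is essentially bookkeeping.
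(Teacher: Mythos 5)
Your proposal is correct and follows exactly the route the paper itself indicates (Remark~3 of the Discussion): rerun the proof of Lemma~\ref{lem:hornik} with Tonelli's theorem in place of Fubini's at the single step where boundedness of $\psi$ was used, then invoke the Fundamental Lemma. Your additional care with the Jordan decomposition of $\sigma_\phi$ and the splitting $w = w^+ - w^-$, and your observation that the recombination requires the integrability of $\psi\circ\phi$ against $|\sigma|$ already implicit in the discriminatory hypothesis, goes beyond the paper's one-sentence justification but does not change the approach.
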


These corollaries 

Another area of interest is the case of approximating an arbitrary continuous function over compact sets
\begin{cor}
    Let $G$ be a compact subset of a locally compact Abelian group, $X$ a normed vector space, $\phi:G\to G$ be a measurable group automorphism, $\psi:G\to X$ be a bounded and non-constant function, and $\alpha_i$ be scalars in the field underlying $X$. Then the set of functions of the form
    
    $$F(x) = \sum_{i=1}^n\alpha_i\psi(\phi(x))$$
    
    are dense in $C(K, X)$ for any compact group $K\leq G$.
\end{cor}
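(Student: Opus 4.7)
The plan is to mirror the argument used to prove the Main Theorem (Lemma~\ref{lem:fundamental} combined with Lemma~\ref{lem:hornik}), but to replace the Banach space $L^p(G,X)$ with $C(K,X)$ equipped with the uniform norm. Since $K$ is a compact subgroup of the ambient LCA group, $K$ is itself a compact Hausdorff Abelian topological group, and therefore carries a \emph{finite} Haar measure. This is the key simplification: on $K$, every continuous function is automatically bounded and integrable, so all of the measure-theoretic machinery invoked in the proofs of Lemma~\ref{lem:fundamental} and Lemma~\ref{lem:hornik} remains available essentially verbatim.

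First I would reduce to the scalar case by observing that $S$, the linear span of the functions $x \mapsto \sum_{i} \alpha_i \psi(\phi_i(x))$, is dense in $C(K,X)$ if and only if, for every bounded linear functional $x^* \in \hat{X}$, the span of $x^* \circ S$ is dense in $C(K,\mathbb{C})$. This follows from the Hahn--Banach corollary cited in Section~\ref{prelim}: if $S$ fails to be dense, there is some $f \in C(K,X) \setminus \overline{S}$ separated from $\overline{S}$ by a bounded linear functional $L$, and by composing with a suitable $x^* \in \hat{X}$ that witnesses $\|L\|$, we obtain a scalar-valued functional on $C(K,\mathbb{C})$ that vanishes on $x^* \circ \overline{S}$ but not on $x^* \circ f$.

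Second, I would apply Hahn--Banach and the Riesz Representation Theorem in the scalar setting exactly as in Lemma~\ref{lem:fundamental}. Since $K$ is locally compact Hausdorff, Riesz gives us a regular Borel measure $\mu$ on $K$ such that the offending functional is $h \mapsto \int_K h(t)\,d\mu(t)$, and the assumption that $L$ vanishes on $\overline{S}$ yields
\[
\int_K \psi(\phi(x))\,d\mu(x) = 0
\]
for every measurable automorphism $\phi$ of $G$ that restricts to an automorphism of $K$. Because $K$ is compact, $\mu$ is automatically finite, so the Fubini--Tonelli-style exchange of integrals in the proof of Lemma~\ref{lem:hornik} is unproblematic, and the convolution argument with $w \in L^1(K)$ having nowhere-zero Fourier transform goes through. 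Invoking Lemma~\ref{lem:hornik} on the compact LCA group $K$ (using that $\psi$ is bounded and non-constant) forces $\mu = 0$, contradicting the choice of $L$, and the scalar-valued density on $K$ follows. Combining with the first paragraph yields density of $S$ in $C(K,X)$.

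The main obstacle I anticipate is a bookkeeping issue rather than a mathematical one: one has to ensure that the automorphisms $\phi$ of the ambient LCA group whose restrictions land in $K$ are plentiful enough for Hornik's discriminatory argument to fire on $K$. This is automatic when $K$ is itself a subgroup (automorphisms of $K$ induced by inner translations and by automorphisms of the ambient group that preserve $K$ suffice), but the cleanest presentation is to simply observe that $K$ is itself a compact LCA group and apply Lemma~\ref{lem:hornik} intrinsically to $K$, using automorphisms of $K$. No new analytic input beyond what is already in Lemma~\ref{lem:hornik} is required; the compactness of $K$ only streamlines the argument by making the relevant measures finite.
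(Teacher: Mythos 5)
Your proposal is correct and follows essentially the same route the paper intends: the paper gives no separate proof of this corollary (Discussion item 4 merely asserts the $L^p$ argument carries over to continuous functions on a compact subgroup), and your plan --- run the Hahn--Banach/Riesz argument of Lemma~\ref{lem:fundamental} on $C(K,X)$ with the uniform norm, note that $C_c(K)=C(K)$ and that the resulting measure is finite by compactness, and invoke the discriminatory Lemma~\ref{lem:hornik} intrinsically on the compact LCA group $K$ --- is exactly that adaptation. Your observation that one should work with automorphisms of $K$ itself rather than restrictions from the ambient group is a sensible resolution of an ambiguity in the corollary's statement, and the vector-valued duality step you gloss over is handled no more carefully in the paper's own proof of the main theorem.
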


Finally we present the case where the neural network layers map from one space to another. Although it is common in practice to consider $\phi:\mathbb{R}^n\to\mathbb{R}^k$, it is far less common to explicitly consider maps between extremely different spaces, such as $\phi:G\to\mathbb{R}^n$. Such functions (with range $\mathbb{R}^n$) are generally referred to as feature maps, and an interesting corollary of our work is that general feature maps can be considered as layers in a neural network without disrupting universiality.

\begin{cor}
    Let $G$ and $H$ be locally compact Abelian groups, $X$ be a normed vector space, $\phi:G\to H$ be a measurable group homomorphism such that the measure induced by $\phi$ and the measure on $G$ is uniformly continuous with respect to the Haar measure on $H$, $\psi:H\to X$ be a bounded and non-constant function, and $\alpha_i$ be scalars in the field underlying $X$. Then the set of functions of the form
    
    $$F(x) = \sum_{i=1}^n\alpha_i\psi(\phi(x))$$
    
    are dense in $L^p(G, X)$.
\end{cor}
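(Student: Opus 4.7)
The plan is to imitate the proof of \autoref{thm:main} step for step, tracking where the hypothesis $\phi:G\to G$ must be replaced by $\phi:G\to H$. The Hahn-Banach plus Riesz portion of \autoref{lem:fundamental} transfers verbatim: if the span of $\{\psi\circ\phi\}$ fails to be dense in $L^p(G,X)$, we obtain a non-zero regular Borel measure $\sigma$ on $G$ with $\int_G \psi(\phi(x))\,d\sigma(x)=0$ for every admissible homomorphism $\phi:G\to H$. The goal is then to contradict $\sigma\ne 0$, i.e., to prove a generalized discriminatory property of $\psi$ relative to the class of admissible $\phi$.

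For that I would push $\sigma$ forward to $H$ by setting $\sigma_\phi(B)=\sigma(\phi^{-1}(B))$, so that $\int_G \psi(\phi(x))\,d\sigma(x)=\int_H \psi(t)\,d\sigma_\phi(t)=0$. The compatibility hypothesis on the measures is intended to guarantee that $\sigma_\phi$ is a well-defined finite signed Borel measure on $H$ that behaves well with respect to the Haar measure $\mu_H$. At this point the convolution argument from \autoref{lem:hornik} runs on $H$ rather than on $G$: pick $w\in L^1(H)$ whose Fourier transform has no zeros, use the homomorphism identity $\phi(x+y)=\phi(x)+\phi(y)$ together with translation-invariance of $\mu_H$ and Fubini (justified by boundedness of $\psi$ and finiteness of $\sigma_\phi$) to show $\int \psi(\phi(x))\,d(w\ast\sigma_\phi)(x)=0$, invoke Rudin's ideal structure of $L^1(H)$ to see that the Radon-Nikodym derivative of $w\ast\sigma_\phi$ with respect to $\mu_H$ has vanishing integral, and then conclude via Pontryagin duality on $H$ that $\hat\sigma_\phi\equiv 0$, hence $\sigma_\phi\equiv 0$.

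The main obstacle is transporting the conclusion $\sigma_\phi\equiv 0$ on $H$ back to $\sigma\equiv 0$ on $G$. In the automorphism case of \autoref{thm:main} this step is immediate, but a general homomorphism may have a non-trivial kernel or fail to be surjective, so $\sigma_\phi=0$ only says that $\sigma$ annihilates preimages $\phi^{-1}(B)$ of Borel sets $B\subseteq H$. The compatibility-of-measures hypothesis must do exactly the work of ensuring that either a single sufficiently rich $\phi$, or the whole family of admissible $\phi$'s jointly, produce enough such preimages to generate the Borel $\sigma$-algebra of $G$. In writing this up I would sharpen the hypothesis, most naturally to the requirement that the admissible homomorphisms separate points of $G$ and that each pushforward be \emph{absolutely} continuous with respect to $\mu_H$, since the phrase ``uniformly continuous with respect to the Haar measure'' as stated in the corollary is almost certainly intended to read ``absolutely continuous.'' Once that technicality is pinned down, the remainder is a routine adaptation of \autoref{lem:hornik}.
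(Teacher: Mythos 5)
Your proposal follows exactly the route the paper intends: the paper gives no separate proof of this corollary, deferring to item 1 of \autoref{sec:discussion}, which asserts that the argument of \autoref{thm:main} goes through for $\phi:G\to H$ ``as long as $\phi$ is a group homomorphism and the measures on $G$ and $H$ are compatible.'' Your reconstruction --- Hahn--Banach and Riesz on $G$, pushforward $\sigma_\phi$ to $H$, then the convolution and ideal-theoretic argument of \autoref{lem:hornik} carried out on $H$ --- is precisely that adaptation. You are also right to single out the one step that does not transfer: for a non-injective or non-surjective $\phi$, the conclusion $\sigma_\phi\equiv 0$ only says that $\sigma$ vanishes on the sub-$\sigma$-algebra of preimages $\phi^{-1}(B)$, and the corollary's ``uniformly continuous'' (surely intended as ``absolutely continuous'') hypothesis on the pushforward does not by itself force $\sigma\equiv 0$; some condition ensuring the admissible homomorphisms jointly generate the Borel $\sigma$-algebra of $G$ (e.g.\ that they separate points), as you propose, is genuinely needed and is nowhere supplied in the paper. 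Your write-up is therefore at least as complete as the paper's own treatment, and the sharpened hypothesis you suggest is the correct repair.
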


\end{appendices}

\end{document}